\pdfoutput=1
\documentclass{article}



     \usepackage[preprint]{neurips_2025}



\usepackage[utf8]{inputenc} 
\usepackage[T1]{fontenc}    
\usepackage{hyperref}       
\usepackage{url}            
\usepackage{booktabs}       
\usepackage{amsfonts}       
\usepackage{nicefrac}       
\usepackage{microtype}      
\usepackage{xcolor}         

\usepackage{framed}
\usepackage{balance}
\usepackage{epsfig}
\usepackage{color}
\usepackage{subfigure}
\usepackage{multirow,tabularx}
\usepackage{mathtools}
\usepackage{graphicx}
\usepackage{multirow}
\usepackage{bm}
\usepackage{csquotes}
\usepackage{array}
\usepackage[yyyymmdd,hhmmss]{datetime}
\usepackage{xcolor,colortbl}

\usepackage{pifont}
\usepackage[algo2e,titlenumbered,ruled]{algorithm2e} 
\usepackage{lipsum,environ,amsmath}
\usepackage{slashbox,booktabs}

\usepackage{pdflscape}

\newcounter{Lcount}
\newcommand{\numsquishlist}{
   \begin{list}{\arabic{Lcount}. }
    { \usecounter{Lcount}
 \setlength{\itemsep}{-.1ex}      \setlength{\parsep}{0ex}
      \setlength{\topsep}{0ex}       \setlength{\partopsep}{0ex}
      \setlength{\leftmargin}{1em} \setlength{\labelwidth}{1em}
      \setlength{\labelsep}{0.1em} } }
\newcommand{\numsquishend}{\end{list}}

\newcommand{\squishlist}{
   \begin{list}{$\bullet$}
    { \setlength{\itemsep}{-.1ex}      \setlength{\parsep}{0ex}
      \setlength{\topsep}{0ex}       \setlength{\partopsep}{0ex}
      \setlength{\leftmargin}{.8em} \setlength{\labelwidth}{1em}
      \setlength{\labelsep}{0.5em} } }
\newcommand{\squishend}{\end{list}}

\newcommand{\VLSSIP}{{\sc Variable-length Similar Subsequences Inference Problem}\xspace}%

\SetCommentSty{mycommfont}

\newenvironment{problem}[1][htb]
  {
   \begin{algorithm2e}[#1]%
  }{\end{algorithm2e}}

\AtBeginDocument{%
  \providecommand\BibTeX{{%
    \normalfont B\kern-0.5em{\scshape i\kern-0.25em b}\kern-0.8em\TeX}}}

\title{Inferring the Most Similar Variable-length Subsequences between Multidimensional Time Series}

%

\author{%
  Thanadej~Rattanakornphan\thanks{First author}\\
  Department of Computer Engineering\\
  Kasetsart University\\
  Bangkok, Thailand \\
  \texttt{ra.thanadej@gmail.com} \\
   \And
  Piyanon Charoenpoonpanich \\
  Independent \\
  Bangkok, Thailand \\
  \texttt{piyanon.charoenpoonpanich@gmail.com} \\
  \AND
  Chainarong Amornbunchornvej \thanks{Corresponding author}\\
  National Electronics and Computer Technology Center \\
  Pathumthani, Thailand \\
  \texttt{chainarong.amo@nectec.or.th} \\
}

\begin{document}

\maketitle

\begin{abstract}
  Finding the most similar subsequences between two multidimensional time series has many applications: e.g. capturing dependency in stock market or discovering coordinated movement of baboons. Considering one pattern occurring in one time series, we might be wondering whether the same pattern occurs in another time series with some distortion that might have a different length. Nevertheless, to the best of our knowledge, there is no efficient framework that deals with this problem yet. In this work, we propose an algorithm that provides the exact solution of finding the most similar multidimensional subsequences between time series where there is a difference in length both between time series and between subsequences. The algorithm is built based on theoretical guarantee of correctness and efficiency. The result in simulation datasets illustrated that our approach not just only provided correct solution, but it also utilized running time only quarter of time compared against the baseline approaches. In real-world datasets, it extracted the most similar subsequences even faster (up to 20 times faster against baseline methods) and provided insights regarding the situation in stock market and following relations of multidimensional time series of baboon movement. Our approach can be used for any time series. The code and datasets of this work are provided for the public use.
\end{abstract}



\section{Introduction}

Every second, streams of data have been generated enormously from both human (e.g. online social networks, content creators) and machines (e.g. data logs of systems, GPS coordinates of devices).  With these mountains of data stream, there are countless patterns and insight that might be inferred and utilized from the data using methods from knowledge discovery from data (KDD). \textit{Time Series Analysis} is one of the KDD fields that focuses on streaming of data, which deals with modeling and discovering patterns from streams of data or time series. 

\begin{framed}
\noindent {\VLSSIP:} {Considering one pattern occurs in one time series, we might be wondering whether the same pattern occurs in other time series with some distortion.  {\bf Given two time series, the goal is to find the most similar subsequences from each time series that might have different lengths. }}
\end{framed}

One of the problems that is important in time series analysis is the problem of searching similar patterns or subsequences within large sets of time series~\cite{10.1145/1066157.1066235,10.1145/2339530.2339576}. To find similar patterns, the first step is to measure a similarity between time series.  Dynamic Time Warping (DTW)~\cite{sakoe1978dynamic} is one of widely-used distance measures~\cite{giorgino2009computing} since it can find generic and distorted patterns between time series~\cite{10.1145/2339530.2339576}. However, to the best of our knowledge, there is no existing method developed to efficiently find the most similar subsequences within two time series s.t. the subsequence in one time series might have a different length compared to the subsequence of another time series. 

In this work, we proposed a generalization measure of DTW s.t. the proposed measure is able to find most similar subsequences within DTW scheme. In the case that the lengths of subsequences are the same as the original time series, the proposed method works the same as typical DTW. Additionally, in the case that the lengths of subsequences are shorter than the original time series, our proposed method provides exact solution of most similar subsequences by using more efficient computational resources than using DTW to search for the solution. The proposed method can be used for any kind of multidimensional time series.

\section{Related work}
Searching for similarity patterns in time series have been studies in literature for many years~\cite{10.1145/1066157.1066235}.  There are several methods that deal with different types of patterns in time series such as motifs~\cite{alaee2020matrix,imamura2024efficient}, discords~\cite{zhu2016matrix}, clustering~\cite{holder2024review} etc.
The typical way of finding similarity patterns is to use some distance/correlation functions of time series such as cross-correlation~\cite{kjaergaard2013time}, Levenshtein distance~\cite{10.1145/375360.375365}, Longest Common Subsequence (LCSS)~\cite{soleimani2020dlcss}, Fréchet distance~\cite{driemel2016clustering} etc.
Among these measures, DTW~\cite{sakoe1978dynamic} is one of the widely used approach to measure a distance between two time series since it can handle distortion of similar patterns between time series~\cite{10.1145/2339530.2339576}. Several versions of DTW have been developed. The classic one is to use window constraints to limit search space (e.g. Sakoe-Chiba band)~\cite{sakoe1978dynamic,geler2022elastic}. The works in \cite{keogh2005exact,ratanamahatana2005three,vlachos2003indexing} used a LB\_Keogh lower bounding to enhance DTW. The work in~\cite{wang2016graphical} enhanced DTW to infer multiple aliments of time series using the network flow. The work in~\cite{alaee2020matrix} can infer one-dimensional most similar subsequences using Matrix profile inference algorithm and lower/upper bounds but the window of subsequences must be similar. 

Nevertheless, there is no direct version of DTW that can find the most similar multidimensional subsequences between time series where there is a difference in length both between time series and between subsequences. The obvious solution is using DTW in the brute-force way, which can provide a solution with an expensive cost.

Hence, in this work, we propose a generalized version of DTW that can handle the problem of finding the most similar subsequences between time series that have different length that can perform efficiently. The proposed algorithm provides the following new properties. 

\squishlist
\item {\bf Inferring arbitrary-length similar subsequences:} our approach can infer a pair of the most similar multidimensional subsequences that can have different lengths; 
\item {\bf Ranking top-$k$ similar subsequences:} our approach ranks top-$k$ of most similar subsequences.  
\squishend

Our proposed approach can be used for any kind of multidimensional time series.

\newtheorem{definition}{Definition}
\newtheorem{theorem}{Theorem}[section]
\newtheorem{lemma}[theorem]{Lemma}
\newtheorem{proposition}[theorem]{Proposition}
\newtheorem{corollary}[theorem]{Corollary}

\newenvironment{proof}[1][Proof]{\begin{trivlist}
\item[\hskip \labelsep {\bfseries #1}]}{\end{trivlist}}
\newenvironment{example}[1][Example]{\begin{trivlist}
\item[\hskip \labelsep {\bfseries #1}]}{\end{trivlist}}
\newenvironment{remark}[1][Remark]{\begin{trivlist}
\item[\hskip \labelsep {\bfseries #1}]}{\end{trivlist}}

\newcommand{\qed}{\nobreak \ifvmode \relax \else
      \ifdim\lastskip<1.5em \hskip-\lastskip
      \hskip1.5em plus0em minus0.5em \fi \nobreak
      \vrule height0.75em width0.5em depth0.25em\fi}

\section{Problem Statement}
\label{sec:probstat}
First, we define the necessary definitions.    

\begin{definition}[Time series and its subsequence]
\label{def:TS}
Given a feature space $\mathcal{F}$ (e.g. $\mathbb{R}^n$), time series $U$ is a sequence $(u_1,\dots,u_l)$ of length $l$ whose element $u_i\in \mathcal{F}$ is a value at time step $i \in [1..l]$ where $[n..m] = \{x \in \mathbb{Z} \mid n \leq x \leq m\}$. Subsequence $U[a,b]$ is a sequence $(u_a,\dots,u_b)$ whose element $u_{j \in [a..b]}$ is an element in $U$. Subsequence $U_{k,\omega_U} = U[k,k + \omega_U - 1]$ where $\omega_U \in \mathbb{Z}^+$ is a time window and $k$ is a starting index. Note that subsequence $U[a,b]$ is also a time series and $U[1,l]=U$.
\end{definition}

\begin{definition}[Warping path]
\label{def:WP}
Warping path P between subsequence $U[a,b]=(u_a,\dots,u_b)$ and subsequence $W[c,d]=(w_c,\dots,w_d)$ is a sequence $(p_1,p_2,\dots,p_n)$ whose element $p_{k\in [1..n]}=(i_k,j_k) \in [a..b] \times [c..d]$. $P$ satisfies these conditions:

(1) Boundary condition: $p_1=(a,c)$ and $p_n=(b,d)$

(2) Continuity and monotonicity condition: $p_{k'+1}-p_{k'} \in \{(0,1),(1,0),(1,1)\}$ where $k' \in [1..n-1]$
\end{definition}

\begin{definition}[Distance between subsequences of time series]
\label{def:DT}
Distance between subsequence $U[a,b]=(u_a,\dots,u_b)$ and subsequence $W[c,d]=(w_c,\dots,w_d)$ is given by a warping path $P$ between them. If $P = (p_1,p_2,\dots,p_n) = ((i_1,j_1),(i_2,j_2),\dots,(i_n,j_n))$, then the distance between $U[a,b]$ and $W[c,d]$ given $P$, denoted by $DistSub(U[a,b],W[c,d],P)$, is $\sum_{k=1}^{n} dist(u_{i_k},w_{j_k})$ where $dist:\mathcal{F} \times \mathcal{F} \rightarrow \mathbb{R}_{\geq 0}$ is a distance function.
\end{definition}

\begin{definition}[Dynamic time warping distance between subsequences of time series]
\label{def:DTW}
Dynamic time warping distance between subsequence $U[a,b]=(u_a,\dots,u_b)$ and subsequence $W[c,d]=(w_c,\dots,w_d)$, denoted by $DTW(U[a,b],W[c,d])$, is the minimum distance between $U[a,b]$ and $W[c,d]$. In other words, $DTW(U[a,b],W[c,d]) = \min_P DistSub(U[a,b],W[c,d],P)$
\end{definition}

\begin{definition}[Distance matrix of time series]
\label{def:DMT}
    Given time series $U =(u_1,\dots,u_n)$ and time series $W = (w_1,\dots,w_m)$, distance matrix $M_{U, W}$, is a $n \times  m$ matrix where $M[i,j] = dist( u_i,w_j)$
\end{definition}

\begin{definition}[Lower bound of DTW]
\label{def:DTWL}
    Given subsequence $U_{i,\omega_U}$, $W_{j,\omega_W}$ where $\omega_U \geq \omega_W$, and a distance matrix $M_{U, W}$, lower bound of DTW between $U_{i,\omega_U}$ and $W_{j,\omega_W}$, denoted by $DTW_L(U_{i,\omega_U}, W_{j,\omega_W})$, $ = \sum_{k=i}^{i+\omega_U-1} \min \{M[k ; j,\dots,j+\omega_W-1]\}$ where $X[a,\dots,b ; c,\dots,d]$ is a submatrix formed by taking row $a$ to $b$ and column $c$ to $d$ from matrix $X$.
\end{definition}

\begin{definition}[Upper bound of DTW]
\label{def:DTWU}
    Given subsequence $U_{i,\omega_U}$, $W_{j,\omega_W}$ where $\omega_U \geq \omega_W$, and a distance matrix $M_{U, W}$, upper bound of DTW between $U_{i,\omega_U}$ and $W_{j,\omega_W}$, denoted by $DTW_{up}(U_{i,\omega_U}, W_{j,\omega_W})$, $= \sum_{k=0}^{\omega_W-2} M[i+k,j+k] + \sum_{k=\omega_W-1}^{\omega_U-1} M[i+k,j+\omega_W-1]$
\end{definition}

\begin{definition}[Domain of interest]
\label{def:DI}
    Domain of interest $D_{n,m} = (U,W,\omega_U,\omega_W)$ is a tuple of time series $U=(u_1,\dots,u_n), W=(w_1,\dots,w_m)$ and time window $\omega_U,\omega_W$.
\end{definition}

\begin{definition}[Dynamic time warping matrix]
\label{def:DTWM}
    Given domain of interest $D_{n,m} = (U,W,\omega_U,\omega_W) = D$, dynamic time warping matrix $DTWM_D$, lower bound of DTW matrix $DTWML_D$, and upper bound of DTW matrix $DTWMU_D$ are $(n-\omega_U+1)\times (m-\omega_W+1)$ matrices where 
    
    $DTWM_D[i,j] = DTW(U_{i,\omega_U}, W_{j,\omega_W})$ 
    
    $DTWML_D[i,j] = DTW_L(U_{i,\omega_U}, W_{j,\omega_W})$ 

    $DTWMU_D[i,j] = DTW_{up}(U_{i,\omega_U}, W_{j,\omega_W})$ 
\end{definition}

Now, we are ready to formalize the problem.

\begin{problem}
    \SetKwInOut{Input}{Input}
    \SetKwInOut{Output}{Output}
    \Input{Time series $U$ and $W$, as well as time windows $\omega_U$ and $\omega_W$.}
    \Output{Subsequence $U_{a,\omega_U}$ of length $\omega_U$ and subsequence $W_{b,\omega_W}$ of length $\omega_W$ which are most similar to each other ($a,b$ s.t. $\forall i,j, DTWM_D[a,b] \leq DTWM_D[i,j] $).}
    \caption{\VLSSIP}
    \label{prob:VLSSIP}
\end{problem}

Next, we provide useful propositions and lemma that we use later. The details of the proofs are provided in the Appendix Section~\ref{sec:apxProof}.

\begin{proposition}
Given subsequence $U_{i,\omega_U}$, $W_{j,\omega_W}$ where $\omega_U \geq \omega_W$, and a distance matrix $M_{U, W} = M$, the following inequality holds:

$DTW_L(U_{i,\omega_U}, W_{j,\omega_W}) \leq DTW(U_{i,\omega_U}, W_{j,\omega_W})$

\label{prop:lowerbound}
\end{proposition}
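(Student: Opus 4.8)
The plan is to prove the slightly stronger statement that \emph{every} warping path already has distance at least $DTW_L(U_{i,\omega_U}, W_{j,\omega_W})$, from which the proposition follows by taking the minimum over warping paths. Write $M = M_{U,W}$, set $a = i$, $b = i+\omega_U-1$, $c = j$, $d = j+\omega_W-1$, and fix an arbitrary warping path $P = (p_1,\dots,p_n) = ((i_1,j_1),\dots,(i_n,j_n))$ between $U[a,b] = U_{i,\omega_U}$ and $W[c,d] = W_{j,\omega_W}$. By Definition~\ref{def:DT}, $DistSub(U_{i,\omega_U}, W_{j,\omega_W}, P) = \sum_{t=1}^{n} M[i_t, j_t]$.

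First I would use the boundary and the continuity/monotonicity conditions of Definition~\ref{def:WP} to control which entries of $M$ appear in this sum. The first coordinates $i_1,\dots,i_n$ form a nondecreasing integer sequence with consecutive differences in $\{0,1\}$, starting at $i_1 = a = i$ and ending at $i_n = b = i+\omega_U-1$; hence it attains every value in $[i..i+\omega_U-1]$. Likewise, the second coordinates $j_1,\dots,j_n$ are nondecreasing from $j_1 = c = j$ to $j_n = d = j+\omega_W-1$, so each $j_t$ lies in $[j..j+\omega_W-1]$.

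Next I would group the terms of the sum by row. For each row index $k \in [i..i+\omega_U-1]$, let $T_k = \{\, t \in [1..n] : i_t = k \,\}$; by the previous step $T_k \neq \emptyset$, and the sets $T_k$ partition $[1..n]$. Fixing any $t_k \in T_k$ and using that $dist$, hence every entry of $M$, is nonnegative,
\[
\sum_{t \in T_k} M[i_t, j_t] \;=\; \sum_{t \in T_k} M[k, j_t] \;\ge\; M[k, j_{t_k}] \;\ge\; \min\{M[k ; j,\dots,j+\omega_W-1]\},
\]
where the last inequality holds because $j_{t_k} \in [j..j+\omega_W-1]$. Summing over all $k$,
\[
DistSub(U_{i,\omega_U}, W_{j,\omega_W}, P) \;=\; \sum_{k=i}^{i+\omega_U-1} \sum_{t \in T_k} M[k, j_t] \;\ge\; \sum_{k=i}^{i+\omega_U-1} \min\{M[k ; j,\dots,j+\omega_W-1]\} \;=\; DTW_L(U_{i,\omega_U}, W_{j,\omega_W}).
\]
Since $P$ was arbitrary, taking the minimum over all warping paths and invoking Definition~\ref{def:DTW} gives $DTW_L(U_{i,\omega_U}, W_{j,\omega_W}) \le DTW(U_{i,\omega_U}, W_{j,\omega_W})$.

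The step I expect to be the crux is showing that no row $k$ of the $U$-window is skipped by $P$, i.e. that each $T_k$ is nonempty: if some row were missing, $DistSub$ would contribute nothing for that row while $DTW_L$ still charges $\min\{M[k;\dots]\} \ge 0$ for it, and the row-wise comparison would break down. This non-skipping is a short consequence of monotonicity together with unit-or-zero increments, but it is the part that genuinely uses the structure of a warping path; the remainder is bookkeeping plus nonnegativity of $dist$. Note the hypothesis $\omega_U \ge \omega_W$ is not needed for the argument itself — it only guarantees that $DTW_L$ is the quantity defined in Definition~\ref{def:DTWL}.
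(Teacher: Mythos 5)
Your proof is correct and follows essentially the same route as the paper's: both arguments rest on the observation that the monotone, unit-or-zero increments force the warping path to visit every row $k\in[i..i+\omega_U-1]$, then group the path's cost by row and bound each nonempty row group below by $\min\{M[k;j,\dots,j+\omega_W-1]\}$ using nonnegativity of $dist$. The only cosmetic difference is that the paper applies this directly to the optimal path $P^*$, whereas you bound an arbitrary path and then minimize, which amounts to the same thing.
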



\begin{lemma}
Given subsequence $U_{i,\omega_U}$, $W_{j,\omega_W}$ where $\omega_U \geq \omega_W$, if sequence $P = (p_n)_{n \in [0..(\omega_U-1)]}$ where $p_n = (i+n,j+n)$ if $n < \omega_W-1$ and $(i+n,j+\omega_W-1)$ if $n \geq \omega_W-1$, then $P$ is a warping path between $U_{i,\omega_U}$ and $W_{j,\omega_W}$
\label{lemma:validpath}
\end{lemma}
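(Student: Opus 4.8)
The plan is to verify directly that the sequence $P$ satisfies the two conditions in Definition~\ref{def:WP} for the pair of subsequences $U[a,b]=U_{i,\omega_U}$ and $W[c,d]=W_{j,\omega_W}$, so that $a=i$, $b=i+\omega_U-1$, $c=j$ and $d=j+\omega_W-1$. (Definition~\ref{def:WP} indexes a warping path by $[1..n]$ while here the index set is $[0..\omega_U-1]$; this is a harmless relabelling, and below ``first element'' means $p_0$ and ``last element'' means $p_{\omega_U-1}$.) First I would check the easy structural facts: that each $p_n$ lies in $[a..b]\times[c..d]$, and the boundary condition. The first coordinate of $p_n$ is always $i+n$ with $0\le n\le\omega_U-1$, hence in $[i..i+\omega_U-1]=[a..b]$; the second coordinate is $j+n$ when $n<\omega_W-1$ (so it lies in $[j..j+\omega_W-2]$) and the constant $d=j+\omega_W-1$ when $n\ge\omega_W-1$, so it always lies in $[c..d]$. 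For the boundary, $p_0=(i,j)=(a,c)$ in both branches (when $\omega_W=1$ the second branch applies already at $n=0$ but still yields $j+\omega_W-1=j$), and since $\omega_U\ge\omega_W$ forces $\omega_U-1\ge\omega_W-1$, the last index falls in the second branch, giving $p_{\omega_U-1}=(i+\omega_U-1,\,j+\omega_W-1)=(b,d)$.

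The substantive step is the continuity-and-monotonicity condition $p_{n+1}-p_n\in\{(0,1),(1,0),(1,1)\}$ for $n\in[0..\omega_U-2]$, which I would handle by a short case split on where $n$ and $n+1$ fall relative to the threshold $\omega_W-1$. The first coordinate always advances by exactly $1$, so only the second coordinate needs attention. If both $n$ and $n+1$ are below $\omega_W-1$, the second coordinates are $j+n$ and $j+n+1$, giving the step $(1,1)$; if $n=\omega_W-2$ (the branch-switch index, possible only when $\omega_W\ge 2$), the second coordinates are $j+\omega_W-2$ and $j+\omega_W-1$, again a step $(1,1)$; and if both $n$ and $n+1$ are at least $\omega_W-1$, the second coordinate stays at $d=j+\omega_W-1$, giving the step $(1,0)$. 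In every case the increment lies in $\{(1,0),(1,1)\}\subseteq\{(0,1),(1,0),(1,1)\}$, so $P$ is a warping path.

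I do not expect a genuine obstacle in this lemma; the only points that call for care are notational — matching Definition~\ref{def:WP}'s $1$-based convention to the $0$-based index set used in the statement, the degenerate case $\omega_W=1$ where the ``diagonal'' portion of $P$ is empty and $P$ is entirely horizontal, and confirming that the single branch-switch at $n=\omega_W-2\to\omega_W-1$ produces a diagonal step rather than a jump that would violate continuity.
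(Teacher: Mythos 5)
Your proposal is correct and follows essentially the same route as the paper's proof: a direct verification that $P$ satisfies the boundary condition and the continuity-and-monotonicity condition of Definition~\ref{def:WP}, with the step increments being $(1,1)$ before the index $\omega_W-1$ and $(1,0)$ afterwards. You are in fact slightly more careful than the paper (checking membership in $[a..b]\times[c..d]$, the $\omega_W=1$ degenerate case, and the branch-switch step at $n=\omega_W-2$), but the argument is the same.
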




\begin{proposition}
Given subsequence $U_{i,\omega_U}$, $W_{j,\omega_W}$ where $\omega_U \geq \omega_W$, and a distance matrix $M_{U, W} = M$, the following inequality holds:

$DTW(U_{i,\omega_U}, W_{j,\omega_W}) \leq DTW_{up}(U_{i,\omega_U}, W_{j,\omega_W})$

\label{prop:upperbound}
\end{proposition}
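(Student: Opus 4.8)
The plan is to exhibit one explicit warping path whose associated distance equals $DTW_{up}$, and then invoke the fact that $DTW$ is a minimum over all warping paths. Concretely, take $P = (p_n)_{n \in [0..(\omega_U-1)]}$ to be exactly the sequence defined in Lemma~\ref{lemma:validpath}, i.e. $p_n = (i+n,j+n)$ for $n < \omega_W-1$ and $p_n = (i+n, j+\omega_W-1)$ for $n \geq \omega_W-1$. By Lemma~\ref{lemma:validpath}, $P$ is a valid warping path between $U_{i,\omega_U}$ and $W_{j,\omega_W}$, so it is one of the paths over which the minimum in Definition~\ref{def:DTW} is taken; hence $DTW(U_{i,\omega_U}, W_{j,\omega_W}) \leq DistSub(U_{i,\omega_U}, W_{j,\omega_W}, P)$.

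Next I would compute $DistSub(U_{i,\omega_U}, W_{j,\omega_W}, P)$ directly from Definition~\ref{def:DT}. Splitting the sum over $n$ at the index $\omega_W-1$ according to the two cases in the definition of $P$ gives
\[
DistSub(U_{i,\omega_U}, W_{j,\omega_W}, P) = \sum_{n=0}^{\omega_W-2} dist(u_{i+n}, w_{j+n}) + \sum_{n=\omega_W-1}^{\omega_U-1} dist(u_{i+n}, w_{j+\omega_W-1}).
\]
By Definition~\ref{def:DMT}, $dist(u_{i+n},w_{j+n}) = M[i+n,j+n]$ and $dist(u_{i+n},w_{j+\omega_W-1}) = M[i+n,j+\omega_W-1]$, so after renaming the summation variable to $k$ the right-hand side is precisely $\sum_{k=0}^{\omega_W-2} M[i+k,j+k] + \sum_{k=\omega_W-1}^{\omega_U-1} M[i+k,j+\omega_W-1]$, which is the definition of $DTW_{up}(U_{i,\omega_U}, W_{j,\omega_W})$ in Definition~\ref{def:DTWU}. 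Chaining this equality with the inequality from the previous paragraph yields the claim.

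I do not expect a genuine obstacle here; the proof is essentially bookkeeping. The only points requiring care are (i) checking that the index ranges in Lemma~\ref{lemma:validpath} (namely $n \in [0..\omega_U-1]$, with the boundary case $n \ge \omega_W-1$ overlapping at $n = \omega_W-1$ where both formulas agree since $j+n = j+\omega_W-1$) line up exactly with the two sums in Definition~\ref{def:DTWU}, and (ii) confirming that the hypothesis $\omega_U \ge \omega_W$ is what makes the second sum nonempty/well-formed and what Lemma~\ref{lemma:validpath} needs. Since $dist \geq 0$ is not even used, the argument is purely combinatorial and should be a few lines.
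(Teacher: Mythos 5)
Your proposal is correct and follows essentially the same route as the paper's proof: exhibit the explicit path from Lemma~\ref{lemma:validpath}, bound $DTW$ by $DistSub$ along that path, and split the sum to recover the expression in Definition~\ref{def:DTWU}. The extra bookkeeping remarks (index alignment, the role of $\omega_U \geq \omega_W$) are fine and do not change the argument.
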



\begin{proposition}
Given domain of interest $D_{n,m} = (U,W,\omega_U,\omega_W) = D$ and $(i,j) \in [1..(n-\omega_U+1)] \times [1..(m-\omega_W+1)]$, if $\min_{i',j'} DTWMU_D[i',j'] < DTWML_D[i,j]$, then $(i,j) \neq argmin_{i,j} DTWM_D[i,j]$
\label{prop:prune}
\end{proposition}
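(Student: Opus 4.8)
The plan is to combine the two sandwiching bounds established in Propositions~\ref{prop:lowerbound} and~\ref{prop:upperbound} with the observation that the minimum of the upper-bound matrix dominates the true DTW value attained at its own minimizing cell. First I would record the entrywise chain
\[
DTWML_D[i',j'] \;\le\; DTWM_D[i',j'] \;\le\; DTWMU_D[i',j']
\]
valid for every admissible index pair $(i',j')$; this is immediate from Definition~\ref{def:DTWM} together with Propositions~\ref{prop:lowerbound} and~\ref{prop:upperbound}. (Those propositions are stated under $\omega_U \geq \omega_W$, matching the orientation assumed by Definitions~\ref{def:DTWL}--\ref{def:DTWU}; since the matrices in Definition~\ref{def:DTWM} have shape $(n-\omega_U+1)\times(m-\omega_W+1)$, every cell they reference indexes a legitimate pair of subsequences of the stated windows, so the chain applies without further fuss.)

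Next, let $(i^\star,j^\star)$ be a minimizer of $DTWMU_D$, so that $DTWMU_D[i^\star,j^\star] = \min_{i',j'} DTWMU_D[i',j']$. Applying the right half of the chain at $(i^\star,j^\star)$ and the left half at the fixed cell $(i,j)$, and inserting the hypothesis $\min_{i',j'} DTWMU_D[i',j'] < DTWML_D[i,j]$ in the middle, I obtain
\[
DTWM_D[i^\star,j^\star] \;\le\; DTWMU_D[i^\star,j^\star] \;=\; \min_{i',j'} DTWMU_D[i',j'] \;<\; DTWML_D[i,j] \;\le\; DTWM_D[i,j].
\]
Thus the cell $(i^\star,j^\star)$ has strictly smaller $DTWM_D$-value than $(i,j)$, which means $(i,j)$ does not attain $\min_{i,j} DTWM_D[i,j]$; equivalently, $(i,j) \neq \mathrm{argmin}_{i,j} DTWM_D[i,j]$, as claimed.

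There is essentially no hard step: once the sandwich bounds are in hand, the statement is a one-line consequence of transitivity of $\le$ and $<$. The only point meriting a moment of care is the bookkeeping just noted — ensuring both $(i,j)$ and the upper-bound minimizer index valid subsequences and that the window orientation required by the bound definitions holds — but this is handled entirely by the dimensions fixed in Definition~\ref{def:DTWM}, so no genuine obstacle arises.
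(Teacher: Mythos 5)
Your argument is correct and follows essentially the same route as the paper's proof: take the minimizer $(i^\star,j^\star)$ of $DTWMU_D$, apply Proposition~\ref{prop:upperbound} there and Proposition~\ref{prop:lowerbound} at $(i,j)$, and chain the inequalities through the hypothesis to conclude $DTWM_D[i^\star,j^\star] < DTWM_D[i,j]$. No gaps; the bookkeeping remark about matrix dimensions and window orientation is a harmless addition.
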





\begin{figure}
    \centering
    \includegraphics[width=0.9\linewidth]{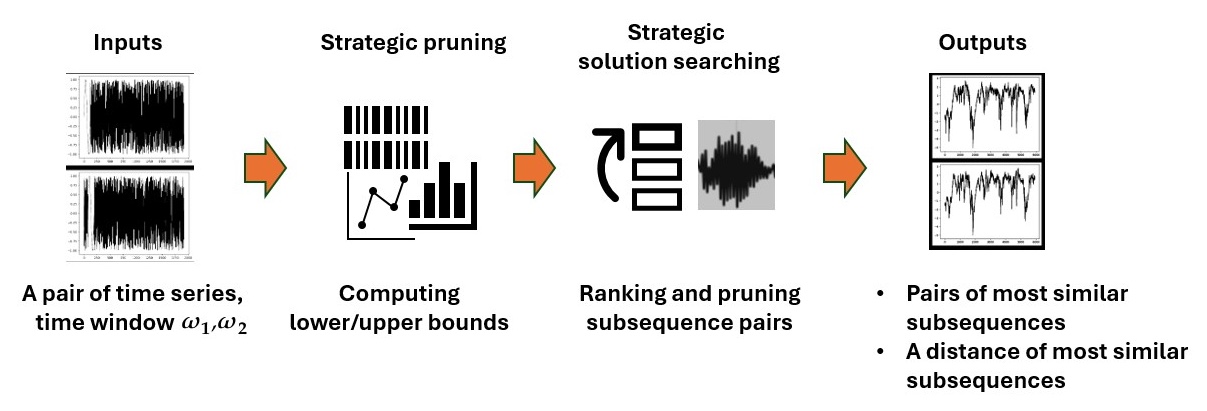}
    \caption{A high-level overview of the proposed framework. Given a pair of time series and time window $\omega_1,\omega_2$, the framework infers the most similar subsequences of length $\omega_1,\omega_2$ respectively.}
    \label{fig:frameworkOverview}
\end{figure}

\setlength{\intextsep}{0pt}
\IncMargin{1em}
\begin{algorithm2e*}
\caption{VLSubsequenceInferFunction}
\label{algo:VLfollowingMotifsInferFunction}
\SetKwInOut{Input}{input}\SetKwInOut{Output}{output}
\Input{Time series $U = (u_1,\dots,u_n)$, $W = (w_1,\dots,w_m)$, time windows $\omega_U$, $\omega_W$, and a distance function $dist(\cdot,\cdot)$. }
\Output{$SolutionIdx$, a set of $(a,b)$ where subsequence $U_{a,\omega_U}$ of length $\omega_U$ is most similar to subsequence $W_{b,\omega_W}$ of length $\omega_W$ and $shortestDist$, the distance between these subsequences.}
\begin{small}
\SetAlgoLined
\uIf{$\omega_W > \omega_U$}
{
    \nl     Swap $U$ and $W$,  $\omega_U$ and $\omega_W$\;
}
\nl Create $n\times m$ array M where $M[i,j]=dist(u_i,w_j)$\; 
\nl Create $n \times (m-\omega_W+1)$ array $MinPool$ where $MinPool[i,j] = min\{M[i;j,\dots,j+\omega_W-1]\}$\;
\nl Create $(n-\omega_U+1) \times (m-\omega_W+1)$ array $MinPath$ where $MinPath[i,j] = sum\{MinPool[i,\dots,i+\omega_U-1;j]\}$\;
\nl Create $(n-\omega_U+1) \times (m-\omega_W+1)$ array $MaxPath$ where $MaxPath[i,j] = \sum_{k=0}^{\omega_W-2} M[i+k,j+k] + \sum_{k=\omega_W-1}^{\omega_U-1} M[i+k,j+\omega_W-1]$\;
\tcc{Find the minimum element in $MaxPath$ matrix.}
\nl   $MinOfMaxPath = \min\{MaxPath\}$\;
\tcc{Strategically prune subsequence pairs using upper/lower bounds.}
\nl   $UnsortedCandidateSolutions = \{(i,j) \mid MinPath[i,j] \leq MinOfMaxPath\}$\;
\nl   $CandidateSolutions = [(i_k,j_k) \in UnsortedCandidateSolutions]$ array where $MinPath[i_k,j_k] \leq MinPath[i_{k+1},j_{k+1}]$\;
\tcc{Strategically search for optimal subsequence pairs by ranking and pruning pairs.}
\nl   $(SolutionIdx,shortestDist) = FindOptimalSolutions(M,\omega_U,\omega_W,CandidateSolutions,MinPath)$\;
\nl Return $(SolutionIdx,shortestDist)$\;
\end{small}
\end{algorithm2e*}\DecMargin{1em}

\section{Methods}
\label{sec:method}
To solve \VLSSIP, given two multidimensional time series and time window of subsequences $\omega_1,\omega_2$ as inputs, Algorithm~\ref{algo:VLfollowingMotifsInferFunction} can be used to find the most similar pairs of subsequences with length $\omega_1$ and $\omega_2$. Figure~\ref{fig:frameworkOverview} shows the overview of  Algorithm~\ref{algo:VLfollowingMotifsInferFunction}. After getting inputs, the algorithm computes the euclidean distances between each time step of both time series; then, it computes lower- and upper- bound distances of each subsequence pair. Afterward, 1) it prunes any subsequence pairs whose lower bounds are greater than an upper bound of any other pairs since their distances cannot be the minimum. Next, 2) the algorithm strategically ranks and searches subsequence pairs by computing an exact DTW distance of the subsequence pairs with the most potential to have the smallest distance. Then, the algorithm prunes any other subsequence pairs whose lower bounds are greater than this exact DTW distance. The process continues until there is no more subsequence pairs to search for. The algorithm finally reports the most similar pairs of subsequences with length $\omega_1$ and $\omega_2$.  Below is the detail of Algorithm~\ref{algo:VLfollowingMotifsInferFunction}. We also show that Algorithm~\ref{algo:VLfollowingMotifsInferFunction} always provides the solution for the Problem~\ref{prob:VLSSIP} in Theorem~\ref{thm:proofofcorrectness}.

\setlength{\intextsep}{0pt}
\IncMargin{1em}
\begin{algorithm2e*}
\caption{FindOptimalSolutions}
\label{algo:FindOptimalSolutions}
\SetKwInOut{Input}{input}\SetKwInOut{Output}{output}
\Input{Distance matrix $M$, time window $\omega_U$ and $\omega_W$, array of starting indices of subsequence pairs  $CandidateSolutions$, matrix of minimum paths $MinPath$.}
\Output{$SolutionIdx$, a set of $(a,b)$ where subsequence $U_{a,\omega_U}$ of length $\omega_U$ is most similar to $W_{b,\omega_W}$ of length $\omega_W$ and $shortestDist$, the distance between these subsequences.}
\begin{small}
\SetAlgoLined
\nl Create $SolutionIdx=\emptyset$\;
\nl $shortestDist = Inf$\;

\tcc{Find the DTW distance of a subsequence pair that is not yet pruned.}
\nl \For{$i=1$ to $|CandidateSolutions|$}
{
        \nl     $(a,b) = CandidateSolutions[i]$\;
        \tcc{Skip the pair if its lower-bound distance is still greater than the current solution}
        \If{$shortestDist < MinPath[a,b]$}
        {
            \nl     Break\;
        }
\nl    
$shortestDist_{new}=DynamicTimeWarping(M,\omega_U,\omega_W,(a,b))$\;
\tcc{Update the shortest distance solution if we find a better one.}
        \uIf{$shortestDist_{new} < shortestDist$}
        {
            \nl     $shortestDist = shortestDist_{new}$\;
            \nl    $SolutionIdx.clear()$\;
            \nl    $SolutionIdx.add((a,b))$\;
        }
        \uElseIf{$shortestDist_{new} = shortestDist$}
        {
            \nl    $SolutionIdx.add((a,b))$\;
        }
        \Else{
        \nl     Continue\;
        }
}
\nl    Return $(SolutionIdx,shortestDist)$\;
\end{small}
\end{algorithm2e*}\DecMargin{1em}

\begin{theorem}
\label{thm:proofofcorrectness}
Algorithm~\ref{algo:VLfollowingMotifsInferFunction} with time series $U=(u_1,\dots,u_n),W=(w_1,\dots,w_m)$ and time window $\omega_U,\omega_W$ as inputs provides an optimal solution for \VLSSIP.
\end{theorem}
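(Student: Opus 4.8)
The plan is to follow the two pruning steps of Algorithm~\ref{algo:VLfollowingMotifsInferFunction} and show that neither can discard an optimal pair, and then to show that the sorted scan performed by Algorithm~\ref{algo:FindOptimalSolutions} returns exactly $\argmin_{i,j} DTWM_D[i,j]$ together with the attained value. First I would dispose of the initial swap: a warping path between $U[a,b]$ and $W[c,d]$ becomes a warping path between $W[c,d]$ and $U[a,b]$ by transposing every pair $p_k$, and $dist$ is symmetric, so $DTW(U[a,b],W[c,d]) = DTW(W[c,d],U[a,b])$. Hence the instance with $\omega_W > \omega_U$ is equivalent to the swapped instance, with optimal pairs in bijection via $(a,b)\mapsto(b,a)$ and the same distance; so I may assume $\omega_U \ge \omega_W$, which is exactly the hypothesis under which Definitions~\ref{def:DTWL}--\ref{def:DTWU} and Propositions~\ref{prop:lowerbound}--\ref{prop:prune} apply. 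Write $D = (U,W,\omega_U,\omega_W)$ for the (possibly swapped) instance.

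Next I would check that the arrays match Definition~\ref{def:DTWM}. Unwinding the definitions of $MinPool$ and $MinPath$ gives $MinPath[i,j] = \sum_{k=i}^{i+\omega_U-1}\min\{M[k;j,\dots,j+\omega_W-1]\} = DTW_L(U_{i,\omega_U},W_{j,\omega_W}) = DTWML_D[i,j]$, and $MaxPath$ is literally $DTWMU_D$ by Definition~\ref{def:DTWU}. Therefore $MinOfMaxPath = \min_{i',j'} DTWMU_D[i',j']$ and $UnsortedCandidateSolutions = \{(i,j): DTWML_D[i,j] \le \min_{i',j'} DTWMU_D[i',j']\}$. By the contrapositive of Proposition~\ref{prop:prune}, every $(i,j) \in \argmin_{i,j} DTWM_D[i,j]$ satisfies $\min_{i',j'}DTWMU_D[i',j'] \ge DTWML_D[i,j]$, hence lies in $UnsortedCandidateSolutions$; so the first pruning retains all optimal pairs. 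It also retains at least one pair (e.g.\ a minimizer of $DTWMU_D$, where $DTWML_D \le DTWM_D \le DTWMU_D = MinOfMaxPath$ by Propositions~\ref{prop:lowerbound} and \ref{prop:upperbound}), so the candidate list is nonempty.

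Then I would analyze the loop over $CandidateSolutions$, which is $UnsortedCandidateSolutions$ ordered so that $MinPath$ is nondecreasing. From the update rules one gets the invariant: after processing a prefix of the list, $shortestDist$ equals $\min DTW(U_{a,\omega_U},W_{b,\omega_W})$ over the already-processed pairs and $SolutionIdx$ is the set of processed pairs attaining it. The crucial point is that the \textbf{Break} is safe: when it fires at position $i$ with pair $(a,b)$ we have $shortestDist < MinPath[a,b]$, and for every pair $(a',b')$ at position $\ge i$ in the sorted list, $DTW(U_{a',\omega_U},W_{b',\omega_W}) \ge DTWML_D[a',b'] = MinPath[a',b'] \ge MinPath[a,b] > shortestDist$ by Proposition~\ref{prop:lowerbound} and the ordering; thus no unscanned pair (nor $(a,b)$ itself) can tie or beat $shortestDist$, so the already-computed $shortestDist$ and $SolutionIdx$ are in fact the minimum over \emph{all} candidate pairs and the full set attaining it. Since $DTWM_D[i,j] = DTW(U_{i,\omega_U},W_{j,\omega_W})$ and every global minimizer of $DTWM_D$ is a candidate, the minimum over candidates equals $\min_{i,j}DTWM_D[i,j]$ and $SolutionIdx = \argmin_{i,j}DTWM_D[i,j]$; undoing the swap if one occurred yields the output required by Problem~\ref{prob:VLSSIP}.

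The main obstacle is packaging the two pruning arguments correctly, especially combining the lower-bound inequality of Proposition~\ref{prop:lowerbound} with the sorted order so that the \textbf{Break} provably skips only non-optimal pairs; the rest is unwinding the definitions of the arrays and invoking Propositions~\ref{prop:lowerbound}, \ref{prop:upperbound}, and \ref{prop:prune}.
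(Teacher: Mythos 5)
Your proposal is correct and follows essentially the same route as the paper: identify $MinPath$ and $MaxPath$ with $DTWML_D$ and $DTWMU_D$, use Proposition~\ref{prop:prune} (equivalently its contrapositive) to show the first pruning keeps every global minimizer, and use Proposition~\ref{prop:lowerbound} with the nondecreasing $MinPath$ ordering to justify that the \textbf{Break} discards only non-optimal pairs. If anything, you are more explicit than the paper on two points it glosses over — the symmetry argument for the initial swap and the loop invariant establishing break-safety and correct tie handling.
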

\begin{proof}
    Algorithm ~\ref{algo:VLfollowingMotifsInferFunction} exhausts all possible cases of pairs of subsequences with length $\omega_U$ and $\omega_W$. However, we will show that when the algorithm prunes certain pairs of sequences; it only prunes those that cannot provide an optimal solution. 

(Forward direction)\\
    Our goal here is to find a set of pairs of subsequences with the length $\omega_U$ and $\omega_W$ that has the minimum DTW distance. In other words, the solution can be formalized as $Solutions = \{\{U_{i,\omega_U},W_{j,\omega_W}\} \mid (i,j) = argmin_{i,j}DTWM_D[i,j]\}$ where $D =$ domain of interest $D_{n,m} = (U,W,\omega_U,\omega_W)$
    
    Note that we can trivially construct $Solutions$ from $SolutionIdx$, a set of starting indices of subsequence pairs. In other words, $Solutions = \{\{U_{i,\omega_U},W_{j,\omega_W}\} \mid (i,j) \in SolutionIdx\}$ where $SolutionIdx = \{(i,j) \mid (i,j) = argmin_{i,j}DTWM_D[i,j]\}$.

    We will prove that Algorithm ~\ref{algo:VLfollowingMotifsInferFunction} outputs $SolutionIdx$.

    On line 4, we construct $MinPath = DTWML_D$.
    On line 5, we construct $MaxPath = DTWMU_D$.
    On line 6, we compute $MinOfMaxPath = \min_{i,j} DTWMU_D[i,j]$.

    It follows from Prop. ~\ref{prop:prune} that $\{(i,j) \mid MinOfMaxPath < MinPath[i,j]\} \cap SolutionIdx = \emptyset$

    Assume that we start with the set of all pairs of starting indices $AllIdx = \{(i,j) \mid \exists DTWM_D[i,j]\}$.

    On line 7, we then obtain $UnsortedCandidateSolutions$ from $AllIdx$ by eliminating index $(i,j) \in AllIdx$ if it doesn't belong in $SolutionIdx$. Formally, $SolutionIdx \subseteq UnsortedCandidateSolutions = AllIdx - \{(i,j) \mid \min_{i',j'} MaxPath[i',j'] < MinPath[i,j]\} = \{(i,j) \mid \min_{i',j'} MaxPath[i',j'] \geq MinPath[i,j]\} = \{(i,j) \mid MinOfMaxPath \geq MinPath[i,j]\}$
    
    On line 8, we obtain $CandidateSolutions$ by sorting $UnsortedCandidateSolutions$. This doesn't dismiss any valid solutions.

    On line 9, $FindOptimalSolutions()$ iterates through $(i,j) \in CandidateSolutions$ and puts $(i^*,j^*) \in CandidateSolutions$ in $SolutionIdx$ if $DTWM_D[i^*,j^*] = shortestDist \leq DTWM_D[i',j']$ for all $i',j'$ and ignore the rest of $(i,j)$ if $shortestDist < MinPath[i,j]$.

    We then return $SolutionIdx$ and $shortestDist$ on line 10.

(Backward direction)\\ 
    We will prove that Algorithm~\ref{algo:VLfollowingMotifsInferFunction} provides a complete set of solutions by contradiction. 
    
    Assume that there exists $(i^*,j^*) \notin  SolutionIdx$ such that $(i^*,j^*) = argmin_{i,j}DTWM_D[i,j]$. Let $(i',j') = argmin_{i',j'} DTWMU_D[i',j']$.

    It follows from the assumption that $MinPath[i^*,j^*] = DTWML_D[i^*,j^*] \leq DTWM_D[i^*,j^*] \leq DTWM_D[i',j'] \leq DTWMU_D[i',j'] = MinOfMaxPath$

    Since $MinPath[i^*,j^*] \leq MinOfMaxPath$, $(i^*,j^*) \in UnsortedCandidateSolutions$. After sorting, $(i^*,j^*) \in CandidateSolutions$

    On line 9, it is the case that $(i^*,j^*) \in SolutionIdx$ since $DTWM_D[i^*,j^*] \leq DTWM_D[i',j']$ for all $i',j'$ (from assumption). This contradicts the assumption, and, therefore, Algorithm~\ref{algo:VLfollowingMotifsInferFunction} must also output $(i^*,j^*)$.


 Therefore, Algorithm~\ref{algo:VLfollowingMotifsInferFunction} provides an optimal solution for the Problem~\ref{prob:VLSSIP}.
\end{proof}


\subsection{Time Complexity}
In the best case, Algorithm~\ref{algo:VLfollowingMotifsInferFunction} performs within $O(n\times m)$ time steps where $n,m$ are the length of time series since it just computes ordinary DTW. For the average case, given $1 < k < n\times m$, the algorithm performs within $O(k\times n\times m)$ time steps since the algorithm prunes certain subsequence pairs and leaves only $k$ pairs to compute DTW (DTW uses $\omega_U\times \omega_W$ time steps). For real-world datasets, the similar subsequences generally have lower distance than random matching of subsequences. This makes our algorithm be able to prune most of subsequence pairs out. Hence, $k$ is typically much lower than $n\times m$. For the worst case scenario, the algorithm performs within $O(n^2m^2)$, the same as using $DTW$ to compute all subsequence pairs. 

\section{Experiments}


\subsection{Experimental setup}

To determine performance of methods, there are two aspects we considered in this work: running time and correctness of inference. For the running time, we vary the length of time series and $\omega_1,\omega_2$ and compare running time for each method. For the correctness of inference, given a ground truth of intervals (GT) within each time series that contains the most similar subsequence (MSS), we measure the performance of methods as follows. The number of true positive cases (TP) represents intersection between the predicted and GT of MSS. The false positive (FP) occurs when a method infers that there is a position $i$ in MSS but the GT disagrees. The false negative (FN) occurs when  MSS contains position $i$ w.r.t. GT but the method fails to infer $i$ as a position in MSS.  The precision, recall, F1 score can be computed from these variables. We vary the level of noise within the simulation datasets to evaluate the robustness of methods. Suppose $W$ is a time series generated from the model, the uniform noise time series $\mathcal{U}$ is added to $U$ with the following equation. 
\begin{equation}
    \hat{W}=(1-\gamma)\times W + \gamma\times \mathcal{U} 
\end{equation}

We vary $\gamma \in {0.1,0.2,0.3,0.4,0.5}$ for our analysis. All experiments were perform on a PC with OS: Ubuntu 22.04.4 LTS x86\_64, Kernel: 6.8.0-59-generic, CPU: AMD Ryzen 7 5800X (16) @ 3.800GHz, Memory: 32033MiB.

\subsection{Time series simulation}

\begin{figure}
    \centering
    \includegraphics[width=0.5\linewidth]{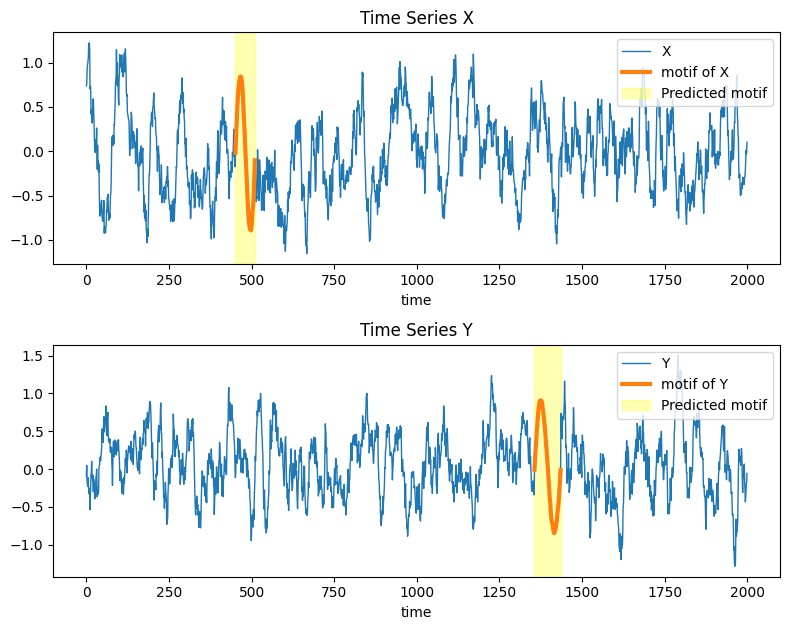}
    \caption{An example of simulated time series that contains the most similar subsequences with different lengths: 60 for X and 80 for Y (orange). The proposed algorithm correctly identified intervals that contain  most similar subsequence in both time series (yellow highlight). }
    \label{fig:exSimTS}
\end{figure}

We use the moving average model with time delay 20 steps and normal $\mathcal{N}(0,4)$ where $\mathcal{N}(\mu,\sigma^2)$ is a normal distribution with mean $\mu$ and variance $\sigma^2$. The generated motifs are sine signals with length $l$ whose frequencies $=1/l$ Hz and amplitude $= 1$. If applicable, we will use time window = 60 and 80. An example of simulated time series is in Fig.~\ref{fig:exSimTS}. The proposed algorithm correctly identified intervals that contain the most similar subsequences with different lengths in both time series.

\subsection{Real world dataset}



\subsubsection{Baboon leader/follower time series}

The time series of trajectories of this baboons' coordinated movement were from GPS collars of an olive baboon (\emph{Papio anubis}) troop in the wild in Mpala Research Centre, Kenya~\cite{strandburg2015shared}. After filtering, it consisted of 16 baboons, whose leader in this particular event had ID = 3. The coordination event began on Aug 2nd, 2012, at 6:00 AM and ended 10 minutes later~\cite{amornbunchornvej2018coordination}. The ID3 initiated the coordinated movement and everyone followed it for 100 seconds; then, ID1 led the group~\cite{FLICAtkdd}. The Variable-Lag-Granger causal relation between time series of ID3 directions as a cause and the aggregate time series of rest of the group can be detected in this event~\cite{amornbunchornvej2021variable}. In this work, we analyze the normalized two-dimensional time series of positions of 16 baboons with the length 600 time steps (one time step per second). 

\subsubsection{Stock prices of companies in similar sector vs those in unrelated sector}

The time series of Nvidia Corp. (Nvidia), Vishay Intertechnology, Inc. (Vishay), and Tyson Foods, Inc. (Tyson) 2020 stock prices were downloaded from Yahoo Finance. In this work, we find the most similar subsequences (time window = 90 days) of the normalized time series of Nvidia, Vishay, and Tyson stock prices, each of which has 253 time steps (one time step per trading day).
\section{Results}

\subsection{Simulation results}
There are four methods that we performed analysis. The brute force method that performs sliding window to select all possible subsequence pair w.r.t. given $\omega_U,\omega_W$, then computes DTW for each pair. The Sakoe Chiba is the brute force method with Sakoe-Chiba constraint. The Strategic pruning (SP) is our approach and the last one is the SP with Sakoe-Chiba constraint.
\begin{figure}
    \centering
    \includegraphics[width=0.7\linewidth]{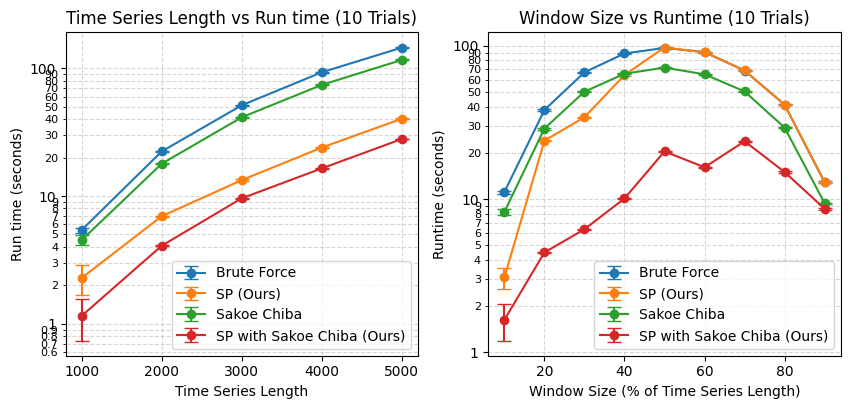}
    \caption{A comparison of running time of methods vs. a) the length of time series to find the most similar subsequences and b) the window size with the fix time series length as 2000.}
    \label{fig:res-RuntimeVsLength}
\end{figure}
Fig.~\ref{fig:res-RuntimeVsLength} a) shows the running time of methods with different lengths of time series. Our proposed method consistently used less time than others; it used around one-five of time compared against the brute-force (SP with Sakoe-Chiba constraint). The time growing rates for all methods are slightly non-linear.  For Fig.~\ref{fig:res-RuntimeVsLength} b), it shows that when setting the windows $\omega_U=\omega_W$ and vary the windows, our SP with Sakoe-Chiba constraint performed fastest. The SP performed well only when the window size is below 40\% of time series length.  

\begin{figure}
    \centering
    \includegraphics[width=0.7\linewidth]{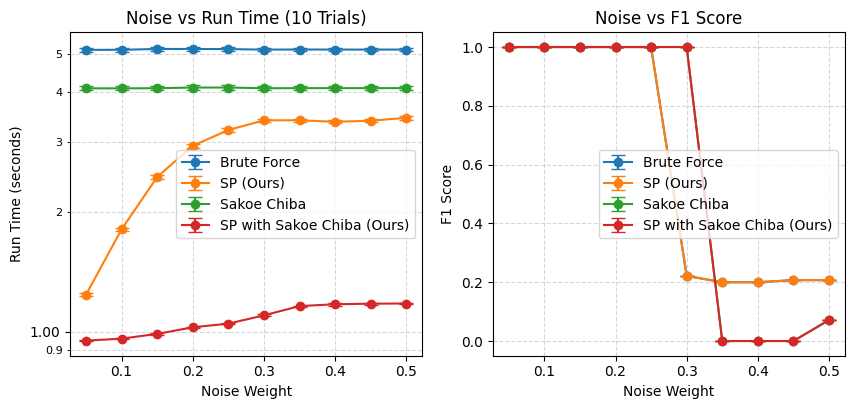}
    \caption{A comparison of noise level vs. a) the running time and b) F1 score of inferring correct subsequences. The time series length is 2000 time steps.}
    \label{fig:res-Sensitivity}
\end{figure}

For sensitivity analysis, Fig.~\ref{fig:res-Sensitivity} a) shows the running time of methods vs. noise. When the noise level is lower, our sliding window approach used one fifth of brute-force time. However, when the noise level increased, all methods used more time to compute. 

Fig.~\ref{fig:res-Sensitivity} b) shows that the performance of inferring the most similar subsequences of all methods dropped after the noise level reach beyond 0.25. Briefly, all methods can tolerate some level of noises but too much noise make it is impossible to infer correct subsequences.

\subsection{Case study: leadership of coordinated movement of baboons}
\label{subsec:caseBaboon}

\begin{figure}
    \centering
    \includegraphics[width=.8\linewidth]{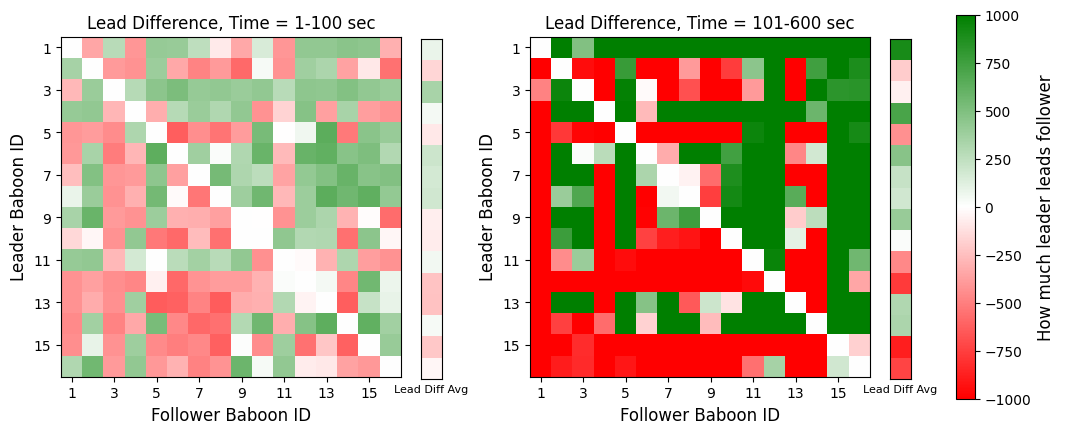}
    \caption{Heatmaps of lead difference during a) the first 100 seconds and b) the last 500 seconds. The y-axis is leader and x-axis is follower. Given top-1000 most similar subsequences, how much leader leads follower = \# of leading subsequence pairs - \# of following subsequence pairs (defined in Subsection~\ref{subsec:caseBaboon}).}
    \label{fig:res-baboon}
\end{figure}

Fig.~\ref{fig:res-baboon} shows heatmaps of lead difference. Let $idx\_l_i$ = starting index of subsequence from Leader ID $i$ and  $idx\_f_j$ = starting index of subsequence from Follower ID $j$. We define leading subsequence pair as a pair where $idx\_l_i < idx\_f_j$, and following subsequence pair as a pair where $idx\_l_i > idx\_f_j$. Given top-1000 most similar subsequences of Leader ID $i$ and Follower ID $j$, each cell in the heatmap has a value of $L-F$ where $L$ = number of leading subsequence pairs, and $F$ = number of following subsequence pairs. A cell is green when $L>F$ and red when $L<F$.


The window size is $\omega_U=\omega_W=60$.  In Fig.~\ref{fig:res-baboon} a), ID3 has the most of green cells during the first 100 seconds, which is consistent with the ground truth that ID3 initiated the group movement. 

In Fig.~\ref{fig:res-baboon} b), ID1 has the most of green cells during the last 500 seconds. This is also consistent with the ground truth that ID1 led the group movement most of the time. Moreover, in the baboon dataset, our approach ran up to 20 times faster than the brute-force approach. 

\subsection{Case study: Stock prices of companies in similar sector vs those in unrelated sector}

\begin{figure}
    \centering
    \includegraphics[width=1\linewidth]{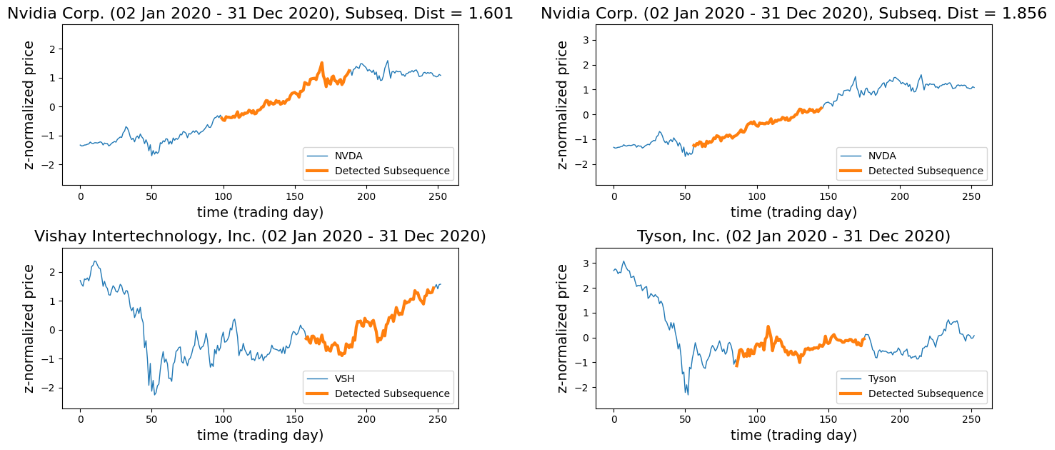}
    \caption{The most similar subsequences (orange) of time series of prices of NVIDIA stock prices vs. (left) common sector company (vishay) and (right) different sector company chain (Tyson).  }
    \label{fig:res-NVDA}
\end{figure}

We analyzed this dataset using $\omega_U=\omega_W=90$ or three months. Fig.~\ref{fig:res-NVDA} (left) shows the most similar subsequences (orange) of prices of NVIDIA stock vs. Vishay stock (company in related sector). The match subsequences established the same trend and pattern with minimum distance as 1.601, which is consistent with the fact that both companies operate in a similar sector and might be influenced by similar factors. In contrast, Fig.~\ref{fig:res-NVDA} (right) shows the most similar subsequences (orange) of prices of NVIDIA stock vs. Tyson stock (company in a different sector) with higher distance as 1.856. The matching in this case has different pattern, which shows that both companies are unrelated. This result demonstrates the utilization of using our method to find pattern of potential dependency in financial data.  
\section{Conclusion}
In this work, we proposed an algorithm providing the exact solution of finding the most similar multidimensional subsequences between time series where there is a difference in length both between time series and between subsequences. The algorithm was built based on theoretical guarantee of correctness and efficiency. The result in simulation datasets illustrated that our approach not just only provided correct solution but also utilized running time only quarter of time compared against the baseline approaches. In real-world datasets, it extracted the most similar subsequences even faster than simulation and provided insights regarding the situation in stock market and following relations of multidimensional time series of baboon movement. The limitation of proposed method is that the meaningful subsequences must have lower distance than any random matching of other subsequences. The future work is needed to improve this aspect.  Our approach can be used for any time series. The code and datasets of this work are provided for the public use at this \href{https://anonymous.4open.science/r/dtw-pair-0D9E/README.md}{link}.


\medskip

{
\small
\bibliographystyle{apalike}



}


\appendix

\section*{Appendix: Details of proofs and algorithms}
\label{sec:apxProof}
The details of proofs of proposition and lemma provided in Section~\ref{sec:probstat}.

\begin{proposition}
(Proposition~\ref{prop:lowerbound})
Given subsequence $U_{i,\omega_U}$, $W_{j,\omega_W}$ where $\omega_U \geq \omega_W$, and a distance matrix $M_{U, W} = M$, the following inequality holds:

$DTW_L(U_{i,\omega_U}, W_{j,\omega_W}) \leq DTW(U_{i,\omega_U}, W_{j,\omega_W})$
\end{proposition}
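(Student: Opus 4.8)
The plan is to show that \emph{every} warping path $P$ between $U_{i,\omega_U}$ and $W_{j,\omega_W}$ already incurs a cost of at least $DTW_L(U_{i,\omega_U}, W_{j,\omega_W})$; since $DTW$ is by Definition~\ref{def:DTW} the minimum of $DistSub$ over all such paths, the inequality follows immediately.

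First I would fix an arbitrary warping path $P = ((i_1,j_1),\dots,(i_n,j_n))$ between $U_{i,\omega_U}$ and $W_{j,\omega_W}$. By Definition~\ref{def:WP}, $p_1 = (i,j)$, $p_n = (i+\omega_U-1,\, j+\omega_W-1)$, and each step advances the first coordinate by $0$ or $1$. Hence the sequence $(i_1,\dots,i_n)$ is nondecreasing, starts at $i$, ends at $i+\omega_U-1$, and never jumps by more than $1$; consequently for every row index $r \in [i..i+\omega_U-1]$ the set $K_r = \{k \in [1..n] : i_k = r\}$ is nonempty. Moreover every pair $p_k$ lies in $[i..i+\omega_U-1] \times [j..j+\omega_W-1]$, so each column index $j_k$ belongs to $[j..j+\omega_W-1]$.

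Next I would regroup the path cost by rows: $DistSub(U_{i,\omega_U}, W_{j,\omega_W}, P) = \sum_{k=1}^{n} M[i_k, j_k] = \sum_{r=i}^{i+\omega_U-1} \sum_{k \in K_r} M[r, j_k]$. Since $dist$ takes values in $\mathbb{R}_{\geq 0}$, every entry of $M$ is nonnegative, so for each $r$ I may discard all but one of the terms in the inner sum and then bound that term below by the row minimum over the relevant column block: $\sum_{k \in K_r} M[r,j_k] \geq \min_{k \in K_r} M[r, j_k] \geq \min\{M[r; j,\dots,j+\omega_W-1]\}$, using $j_k \in [j..j+\omega_W-1]$. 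Summing over $r$ gives $DistSub(U_{i,\omega_U}, W_{j,\omega_W}, P) \geq \sum_{r=i}^{i+\omega_U-1}\min\{M[r; j,\dots,j+\omega_W-1]\} = DTW_L(U_{i,\omega_U}, W_{j,\omega_W})$, and taking the minimum over $P$ on the left-hand side yields $DTW(U_{i,\omega_U}, W_{j,\omega_W}) \geq DTW_L(U_{i,\omega_U}, W_{j,\omega_W})$.

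The only real subtlety is the claim that every row index in $[i..i+\omega_U-1]$ is visited by the path; this is exactly where the continuity-and-monotonicity condition is essential, since without the ``step of at most $1$'' part a path could skip a row and the row-wise minimum bound could overshoot. Everything else is bookkeeping together with nonnegativity of the distance function, so I expect no further obstacle.
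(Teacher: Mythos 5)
Your proof is correct and takes essentially the same route as the paper: both arguments use the boundary plus unit-step conditions to show the path's first coordinate visits every row of $[i..(i+\omega_U-1)]$, regroup the path cost row by row, and invoke nonnegativity of $dist$ to bound each row's contribution below by the row minimum over columns $[j..(j+\omega_W-1)]$. The only cosmetic difference is that you bound an arbitrary warping path and then minimize over $P$, while the paper applies the same bound directly to the optimal path $P^*$.
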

\begin{proof}
    Assuming that the warping path $P^*$ is used to compute $DTW(U_{i,\omega_U}, W_{j,\omega_W})$, each element in $P^*$ has to have its first coordinate go through each integer in $[i..(i+\omega_U-1)]$ since the first and last elements in $P^*$ are $(i,j)$ and $(i+\omega_U-1,j+\omega_W-1)$, respectively, and each successive element in $P^*$ can change its first coordinate by only 0 or +1. Formally, $P^*$ is a sequence formed by consecutively concatenating the sequence $P_i,P_{i+1},\dots,P_{i+\omega_U-1}$ given that sequence $P_{k \in [i..(i+\omega_U-1)]} = (p_1,p_2,\dots,p_{l_k})$ whose element $p_{n \in [1..l_k]} = (k,q_{k,n})$ and $q_{k,n} \in [j..(j+\omega_W-1)]$.

    $DTW(U_{i,\omega_U}, W_{j,\omega_W}) = DistSub(U_{i,\omega_U}, W_{j,\omega_W},P^*) = \sum_{k=i}^{i+\omega_U-1} \sum_{n=1}^{l_k} M[k,q_{k,n}] \geq \sum_{k=i}^{i+\omega_U-1} \min \{M[k;j,\dots,j+\omega_W-1]\} = DTW_L(U_{i,\omega_U}, W_{j,\omega_W})$
\end{proof}

\begin{lemma}
(Lemma~\ref{lemma:validpath})
Given subsequence $U_{i,\omega_U}$, $W_{j,\omega_W}$ where $\omega_U \geq \omega_W$, if sequence $P = (p_n)_{n \in [0..(\omega_U-1)]}$ where $p_n = (i+n,j+n)$ if $n < \omega_W-1$ and $(i+n,j+\omega_W-1)$ if $n \geq \omega_W-1$, then $P$ is a warping path between $U_{i,\omega_U}$ and $W_{j,\omega_W}$
\end{lemma}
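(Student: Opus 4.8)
The plan is to verify directly that $P$ meets the two defining conditions of a warping path in Definition~\ref{def:WP}, after reconciling the index convention: the lemma lists $P$ as $(p_0,\dots,p_{\omega_U-1})$ while Definition~\ref{def:WP} writes $(p_1,\dots,p_n)$, and these agree up to a shift of the index with $n$ in the definition equal to $\omega_U$ here. The target subsequences are $U_{i,\omega_U}=U[i,i+\omega_U-1]$ and $W_{j,\omega_W}=W[j,j+\omega_W-1]$, so in the notation of Definition~\ref{def:WP} we must have $(a,c)=(i,j)$ at the start, $(b,d)=(i+\omega_U-1,j+\omega_W-1)$ at the end, every $p_n\in[i..i+\omega_U-1]\times[j..j+\omega_W-1]$, and every consecutive increment in $\{(0,1),(1,0),(1,1)\}$. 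Recall also $\omega_U\ge\omega_W\ge1$.

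First I would check that each $p_n$ lies in the required index box. The first coordinate of $p_n$ is $i+n$ with $n\in[0..\omega_U-1]$, which sweeps exactly $[i..i+\omega_U-1]$. The second coordinate is $j+n$ when $n<\omega_W-1$ (hence in $[j..j+\omega_W-2]$) and $j+\omega_W-1$ when $n\ge\omega_W-1$; in either case it lies in $[j..j+\omega_W-1]$. Next I would verify the boundary condition. For the start, $p_0=(i,j)$: if $\omega_W\ge2$ then $0<\omega_W-1$ and the first branch gives $(i+0,j+0)=(i,j)$, while if $\omega_W=1$ then $0\ge\omega_W-1$ and the second branch gives $(i+0,j+\omega_W-1)=(i,j)$ as well. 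For the end, since $\omega_U\ge\omega_W$ we have $\omega_U-1\ge\omega_W-1$, so the second branch applies and $p_{\omega_U-1}=(i+\omega_U-1,j+\omega_W-1)=(b,d)$.

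Finally I would check continuity and monotonicity by splitting the increment index $n\in[0..\omega_U-2]$ at the ``turning point'' $\omega_W-1$. For $n\le\omega_W-2$: if $n+1<\omega_W-1$ both $p_n,p_{n+1}$ use the diagonal branch and $p_{n+1}-p_n=(i+n+1,j+n+1)-(i+n,j+n)=(1,1)$, and in the single boundary step $n=\omega_W-2$ we get $p_{\omega_W-1}-p_{\omega_W-2}=(i+\omega_W-1,j+\omega_W-1)-(i+\omega_W-2,j+\omega_W-2)=(1,1)$. For $n\ge\omega_W-1$: both $p_n,p_{n+1}$ use the horizontal branch and $p_{n+1}-p_n=(i+n+1,j+\omega_W-1)-(i+n,j+\omega_W-1)=(1,0)$. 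In every case the increment lies in $\{(1,1),(1,0)\}\subseteq\{(0,1),(1,0),(1,1)\}$, so both conditions of Definition~\ref{def:WP} hold and $P$ is a warping path.

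The proof is entirely a matter of substitution; the only place requiring care is the transition at $n=\omega_W-1$, together with the degenerate case $\omega_W=1$ where the diagonal portion of $P$ is empty and the single index $n=0$ already falls in the second branch. I would therefore treat these two spots explicitly (as above) and leave the remaining increments to the generic diagonal/horizontal computations.
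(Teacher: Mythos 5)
Your proof is correct and follows essentially the same route as the paper's: verify the boundary condition and then check that every consecutive increment is $(1,1)$ before the turning point at $\omega_W-1$ and $(1,0)$ after it. You are simply more explicit than the paper about the index-box membership, the transition step, and the degenerate case $\omega_W=1$, which is fine.
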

\begin{proof}
$P = (p_0,p_2,\dots,p_{\omega_U-1})$ where $p_0 = (i,j)$ and $p_{\omega_U-1} = (i+\omega_U-1,j+\omega_W-1)$ so $P$ satisfies (1) the boundary condition.

For $p_{k \in [0..(\omega_U-1)]}$, $p_{k+1} - p_k = (1,1)$ when $k < \omega_W-1$ and $p_{k+1} - p_k = (1,0)$ when $k \geq \omega_W-1$ so $P$ satisfies (2) the continuity and monotonicity condition.

Since $P$ satisfies (1) and (2), $P$ is a warping path between $U_{i,\omega_U}$ and $W_{j,\omega_W}$
\end{proof}

\begin{proposition}
(Proposition~\ref{prop:upperbound})
Given subsequence $U_{i,\omega_U}$, $W_{j,\omega_W}$ where $\omega_U \geq \omega_W$, and a distance matrix $M_{U, W} = M$, the following inequality holds:

$DTW(U_{i,\omega_U}, W_{j,\omega_W}) \leq DTW_{up}(U_{i,\omega_U}, W_{j,\omega_W})$

\end{proposition}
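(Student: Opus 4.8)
The plan is to exhibit one concrete warping path whose associated distance equals $DTW_{up}(U_{i,\omega_U}, W_{j,\omega_W})$ exactly, and then invoke the fact that $DTW$ is defined as the minimum of $DistSub$ over \emph{all} warping paths (Definition~\ref{def:DTW}), so it can only be smaller or equal. This mirrors the structure of the lower-bound argument for Proposition~\ref{prop:lowerbound}, but in the opposite direction: there we bounded an arbitrary optimal path from below term-by-term; here we bound the minimum from above by evaluating it at a single feasible path.

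Concretely, first I would take the path $P = (p_n)_{n \in [0..(\omega_U-1)]}$ defined in Lemma~\ref{lemma:validpath}, namely $p_n = (i+n, j+n)$ for $n < \omega_W - 1$ and $p_n = (i+n, j+\omega_W-1)$ for $n \geq \omega_W - 1$. Lemma~\ref{lemma:validpath} already tells us this $P$ is a legitimate warping path between $U_{i,\omega_U}$ and $W_{j,\omega_W}$ (this is where the hypothesis $\omega_U \geq \omega_W$ is used, so that the second branch of the definition is non-vacuous and the path reaches the correct boundary point). Next I would write out $DistSub(U_{i,\omega_U}, W_{j,\omega_W}, P) = \sum_{n=0}^{\omega_U-1} dist(u_{i+n}, w_{\pi(n)})$ where $\pi(n) = j+n$ for $n < \omega_W-1$ and $\pi(n) = j + \omega_W - 1$ for $n \geq \omega_W - 1$, split the sum at $n = \omega_W - 1$, and rewrite $dist(u_a, w_b)$ as $M[a,b]$ using Definition~\ref{def:DMT}. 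This yields $\sum_{k=0}^{\omega_W-2} M[i+k, j+k] + \sum_{k=\omega_W-1}^{\omega_U-1} M[i+k, j+\omega_W-1]$, which is precisely $DTW_{up}(U_{i,\omega_U}, W_{j,\omega_W})$ by Definition~\ref{def:DTWU}.

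Finally, since $DTW(U_{i,\omega_U}, W_{j,\omega_W}) = \min_{P'} DistSub(U_{i,\omega_U}, W_{j,\omega_W}, P')$ and $P$ is one admissible choice of $P'$, we conclude $DTW(U_{i,\omega_U}, W_{j,\omega_W}) \leq DistSub(U_{i,\omega_U}, W_{j,\omega_W}, P) = DTW_{up}(U_{i,\omega_U}, W_{j,\omega_W})$, as desired.

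There is no real obstacle here: the argument is a one-line application of ``minimum $\leq$ value at a particular point'' once the bookkeeping is done. The only place to be careful is the index arithmetic when matching the piecewise definition of $P$ against the two sums in Definition~\ref{def:DTWU} — in particular checking the boundary term at $n = \omega_W - 1$ lands in the right sum and that the ranges $[0..\omega_W-2]$ and $[\omega_W-1..\omega_U-1]$ partition $[0..\omega_U-1]$ — but this is routine and already implicitly handled by the construction in Lemma~\ref{lemma:validpath}.
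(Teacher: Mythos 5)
Your proposal is correct and follows essentially the same route as the paper: both invoke the explicit warping path from Lemma~\ref{lemma:validpath}, evaluate $DistSub$ along it to obtain exactly the expression in Definition~\ref{def:DTWU}, and conclude via $DTW = \min_{P'} DistSub \leq DistSub(\cdot,\cdot,P)$. No gaps; the index bookkeeping you flag is handled just as in the paper's proof.
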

\begin{proof}
From Lemma ~\ref{lemma:validpath}, let a sequence $P = (p_n)_{n \in [0..(\omega_U-1)]}$ where $p_n = (i+n,j+n)$ if $n < \omega_W-1$ and $(i+n,j+\omega_W-1)$ if $n \geq \omega_W-1$ be a warping path between $U_{i,\omega_U}$ and $W_{j,\omega_W}$. 

$DTW(U_{i,\omega_U}, W_{j,\omega_W}) = \min_{P'} DistSub(U_{i,\omega_U}, W_{j,\omega_W},P') \leq DistSub(U_{i,\omega_U}, W_{j,\omega_W},P) = \sum_{k=0}^{\omega_W-2} dist(u_{i+k},w_{j+k}) + \sum_{k=\omega_W-1}^{\omega_U-1} dist(u_{i+k},w_{j+\omega_W-1}) = \sum_{k=0}^{\omega_W-2} M[i+k,j+k] + \sum_{k=\omega_W-1}^{\omega_U-1} M[i+k,j+\omega_W-1] = DTW_{up}(U_{i,\omega_U}, W_{j,\omega_W})$
\end{proof}

\begin{proposition}
(Proposition~\ref{prop:prune})
Given domain of interest $D_{n,m} = (U,W,\omega_U,\omega_W) = D$ and $(i,j) \in [1..(n-\omega_U+1)] \times [1..(m-\omega_W+1)]$, if $\min_{i',j'} DTWMU_D[i',j'] < DTWML_D[i,j]$, then $(i,j) \neq argmin_{i,j} DTWM_D[i,j]$
\end{proposition}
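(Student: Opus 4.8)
The plan is to chain together the lower-bound inequality (Proposition~\ref{prop:lowerbound}) and the upper-bound inequality (Proposition~\ref{prop:upperbound}) through the witness index that attains the minimum of the upper-bound matrix. Concretely, first I would let $(i^*,j^*) \in \argmin_{i',j'} DTWMU_D[i',j']$; this is well defined since $DTWMU_D$ is a finite $(n-\omega_U+1)\times(m-\omega_W+1)$ matrix. By Definition~\ref{def:DTWM}, $DTWMU_D[i^*,j^*] = DTW_{up}(U_{i^*,\omega_U}, W_{j^*,\omega_W})$, and by Proposition~\ref{prop:upperbound} (applied with the roles of the longer/shorter window as dictated by $\omega_U \geq \omega_W$), $DTWM_D[i^*,j^*] = DTW(U_{i^*,\omega_U}, W_{j^*,\omega_W}) \leq DTW_{up}(U_{i^*,\omega_U}, W_{j^*,\omega_W}) = DTWMU_D[i^*,j^*]$.

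Next I would invoke the hypothesis $\min_{i',j'} DTWMU_D[i',j'] < DTWML_D[i,j]$, i.e. $DTWMU_D[i^*,j^*] < DTWML_D[i,j]$, and combine it with Proposition~\ref{prop:lowerbound}, which gives $DTWML_D[i,j] = DTW_L(U_{i,\omega_U}, W_{j,\omega_W}) \leq DTW(U_{i,\omega_U}, W_{j,\omega_W}) = DTWM_D[i,j]$. Splicing the three relations yields
\[
DTWM_D[i^*,j^*] \;\leq\; DTWMU_D[i^*,j^*] \;<\; DTWML_D[i,j] \;\leq\; DTWM_D[i,j],
\]
so $DTWM_D[i^*,j^*] < DTWM_D[i,j]$. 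Hence $(i,j)$ is strictly worse than $(i^*,j^*)$ and therefore cannot be a minimizer of $DTWM_D$, which is exactly the claim $(i,j) \neq \argmin_{i,j} DTWM_D[i,j]$.

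There is no substantial obstacle here: the argument is a two-line transitive chain once the correspondence between the matrices in Definition~\ref{def:DTWM} and the scalar quantities $DTW$, $DTW_L$, $DTW_{up}$ is made explicit. The only minor points to handle carefully are (i) confirming that the index ranges in the hypothesis and conclusion match the dimensions of the three matrices so that $(i^*,j^*)$ and $(i,j)$ are legitimate entries, and (ii) ensuring the orientation $\omega_U \geq \omega_W$ assumed by Propositions~\ref{prop:lowerbound} and~\ref{prop:upperbound} is consistent with the domain of interest (which is fine, or can be arranged by the same swap performed in Algorithm~\ref{algo:VLfollowingMotifsInferFunction}).
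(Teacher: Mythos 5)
Your proposal is correct and follows essentially the same argument as the paper: take the minimizer $(i^*,j^*)$ of $DTWMU_D$, apply Proposition~\ref{prop:upperbound} there and Proposition~\ref{prop:lowerbound} at $(i,j)$, and chain the inequalities through the hypothesis to conclude $DTWM_D[i^*,j^*] < DTWM_D[i,j]$. No differences worth noting.
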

\begin{proof}
Let $(i',j') = argmin_{i',j'} DTWMU_D[i',j']$ and $(i,j) \in [1..(n-\omega_U+1)] \times [1..(m-\omega_W+1)]$

From Def. ~\ref{def:DTWM} and Prop. ~\ref{prop:upperbound}, $DTWM_D[i',j'] =DTW(U_{i',\omega_U}, W_{j',\omega_W}) \leq  DTW_{up}(U_{i',\omega_U}, W_{j',\omega_W}) = DTWMU_D[i',j'] = \min_{i',j'} DTWMU_D[i',j'])$ 

From Def. ~\ref{def:DTWM} and Prop. ~\ref{prop:lowerbound}, $DTWML_D[i,j] = DTW_L(U_{i,\omega_U}, W_{j,\omega_W}) \leq DTW(U_{i,\omega_U}, W_{j,\omega_W}) = DTWM_D[i,j]$

From the premise, $DTWM_D[i',j'] \leq \min_{i',j'} DTWMU_D[i',j'] < DTWML_D[i,j] \leq DTWM_D[i,j]$

Since $DTWM_D[i',j'] < DTWM_D[i,j]$, $(i,j) \neq argmin_{i,j} DTWM_D[i,j]$
\end{proof}

The detail of dynamic time warping algorithm is given below.

\setlength{\intextsep}{0pt}
\IncMargin{1em}
\begin{algorithm2e*}
\caption{DynamicTimeWarping}
\label{algo:DTW}
\SetKwInOut{Input}{input}\SetKwInOut{Output}{output}
\Input{Distance matrix $M$, time windows $\omega_U$ and $\omega_W$, starting indices of a subsequence pair $(a,b)$. }
\Output{$shortestDist$, DTW distance between subsequences $U_{a,\omega_U}$ and $W_{b,\omega_W}$. }
\begin{small}
\SetAlgoLined
\tcc{$AD$ is an accumulated distance matrix.}
\nl     Create $\omega_U \times \omega_W$ array $AD$\;
\nl     $AD[i,1]=\sum_{k=0}^{i-1} M[a+k,b]$ for $i \in [1..\omega_U]$\;
\nl     $AD[1,j]=\sum_{k=0}^{j-1} M[a,b+k]$ for $j \in [1..\omega_W]$\;
\nl     $AD[n,m]=min(AD[n-1,m],AD[n,m-1],AD[n-1,m-1])+M[a+n-1,b+m-1]$ for $n \in [2..\omega_U]$ and $m \in [2..\omega_W]$\;
\nl     $shortestDist = AD[\omega_U,\omega_W]$\; 
\nl     Return $shortestDist$\;
\end{small}
\end{algorithm2e*}\DecMargin{1em}

\section*{Appendix: Results}

\begin{figure}
    \centering
    \includegraphics[width=1\linewidth]{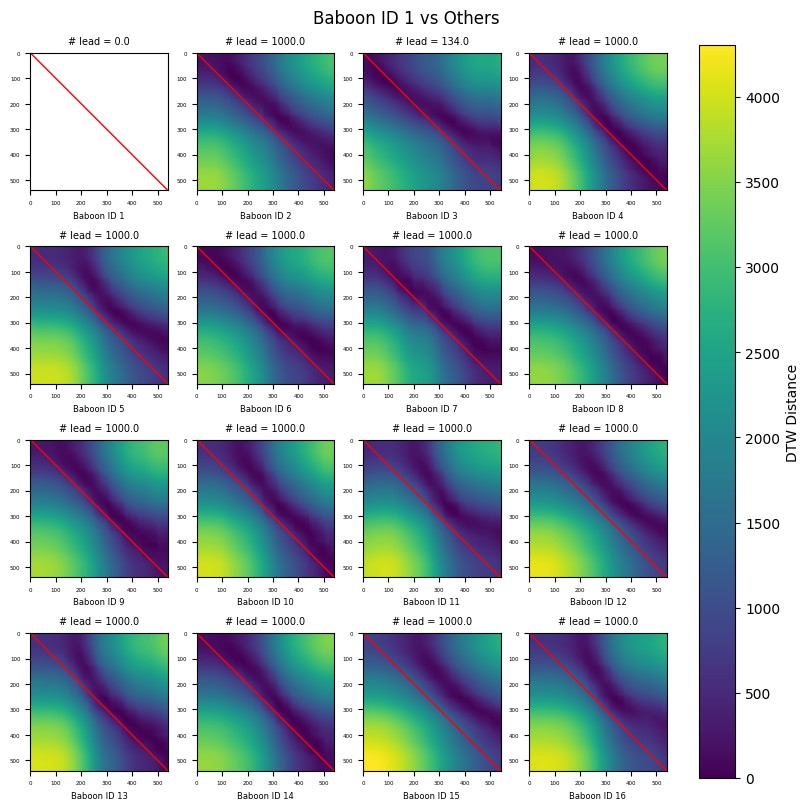}
    \caption{Heatmap of distances of ID1 as a leader vs. the rest. The vertical axis is leader and horizontal axis is a follower. The \#lead is a number of top 1000th most similar subsequences that the leader leads its follower.}
    \label{fig:ID1baboon}
\end{figure}

\begin{figure}
    \centering
    \includegraphics[width=1\linewidth]{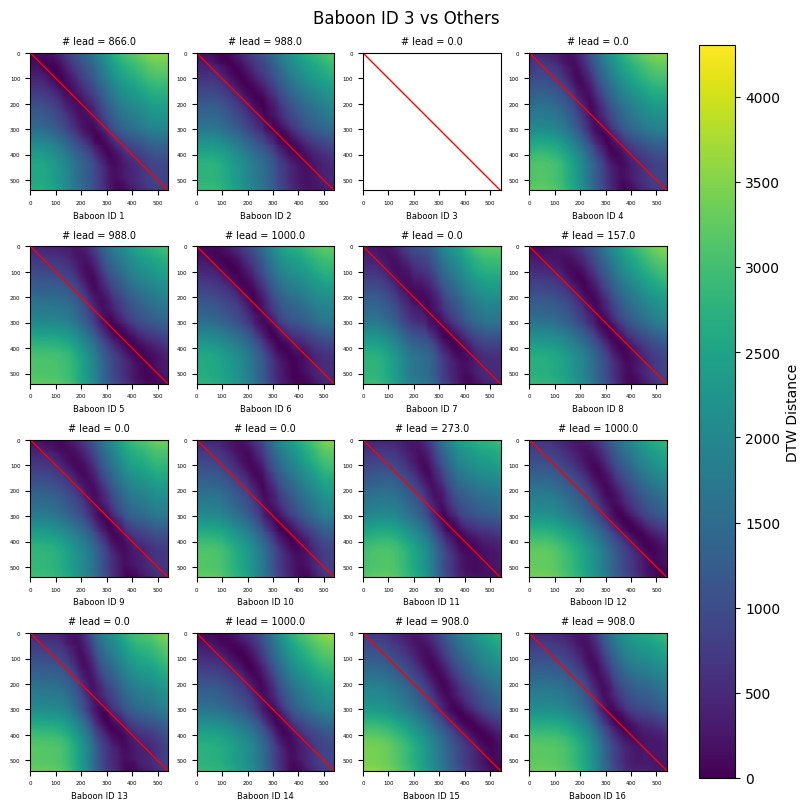}
    \caption{Heatmap of distances of ID3 as a leader vs. the rest. The vertical axis is leader and horizontal axis is a follower. The \#lead is a number of top 1000th most similar subsequences that the leader leads its follower.}
    \label{fig:ID3baboon}
\end{figure}


\end{document}